\documentclass[onefignum,onetabnum]{siamonline250211}

\usepackage{amsfonts}
\usepackage{amssymb}
\usepackage{mathtools}
\usepackage{booktabs}
\usepackage{graphicx}
\usepackage{microtype}
\usepackage{enumitem}

\ifpdf
  \DeclareGraphicsExtensions{.pdf,.png,.jpg}
\fi

\newsiamthm{assumption}{Assumption}
\newsiamremark{remark}{Remark}

\newcommand{\I}{\mathcal I}
\newcommand{\Y}{\mathcal Y}
\newcommand{\W}{\mathcal W}
\newcommand{\R}{\mathcal R}

\newcommand{\Pp}{\mathbb P}

\headers{Common-Witness Image Auditing}{U. Faghihi and A. Saki}

\title{Common-Witness Certificates and Sharp Feature Bounds for
Counterfactual Image Auditing}
\author{Usef Faghihi\thanks{Department of Mathematics and Computer Science,
Universit\'e du Qu\'ebec \`a Trois-Rivi\`eres, Trois-Rivi\`eres, Quebec,
Canada (\email{usef.faghihi@uqtr.ca}). Both authors contributed equally:
each contributed 50\% of the theoretical work and 50\% of the experimental
work.}
\and Amir Saki\footnotemark[1]}

\ifpdf
\hypersetup{
  pdftitle={Common-Witness Certificates and Sharp Feature Bounds for Counterfactual Image Auditing},
  pdfauthor={Usef Faghihi and Amir Saki}
}
\fi

\begin{document}
\maketitle

\begin{abstract}
Structured image editors may satisfy every regional plausibility constraint
separately although no single latent explanation is compatible with the
complete output. We formulate this local-to-global failure through a
common-witness grade and its witness nerve. The framework separates auditing
from causal identification: shared exogeneity alone permits every coupling of
the regime marginals, whereas a scientifically justified witness relation
yields exactly the relation-supported couplings and hence sharp feature-level
partial-identification bounds. For quasiconvex regional losses, classical
Helly theory gives finite incompatibility certificates. For the labeled
witness structures generated by the audit, we derive a heterogeneous
action-stratified certificate, a sharp tolerant certificate for finite
witness atlases, and a blocker-hypergraph repair formula. Fractional Helly
theory converts dense local compatibility into a large jointly coherent
subset. Simultaneous confidence regions for regime marginals provide
finite-sample outer confidence bounds for the complete identified interval;
Bonferroni--Clopper--Pearson bands give a nonasymptotic multinomial
construction. In controlled MNIST rotations and finite paired studies on
Morpho-MNIST and smallNORB, archived summaries report coherent-pair acceptance
of 0.9593--0.9778, while regional-action patchworks retain local acceptance
of 0.9702--0.9804 but have global acceptance of 0--0.1138. Synthetic studies
exercise sharp bounds, certificate recovery, and structured computation up to
$10^4$ feature states and $10^5$ regional constraints. The method audits a
prespecified feature relation; it does not identify unrestricted pixel-level
counterfactuals.
\end{abstract}

\begin{keywords}
counterfactual images, partial identification, Helly theorem, optimal
transport, finite-sample inference, infeasibility certificates
\end{keywords}

\begin{MSCcodes}
52A35, 62G15, 62P30, 68T45, 90C05
\end{MSCcodes}

\section{Introduction}
\label{sec:introduction}

Counterfactual image editing asks what the image of the same unit would have
been under another intervention. In an augmented structural causal model
(ASCM), both potential images are evaluated at the same exogenous state.
That semantic convention does not reveal their joint distribution. Pan and
Bareinboim show that unrestricted pixel-level counterfactuals are generally
not identified from image--label samples, even when a latent causal diagram
is supplied \cite{pan2024counterfactual}. Their feature-based relaxation is
therefore a bound on declared care-set quantities, not recovery of the true
pixel coupling.

We study a complementary auditing failure. An edited image can satisfy each
protected-region constraint with a different latent explanation while no one
explanation satisfies all regions simultaneously. Formally, a witness for
each region need not be one witness for every region. This
\emph{global-witness gap} is invisible to an audit that aggregates independent
local passes.

The gap becomes scientifically informative only when the witness family is
anchored outside the candidate pair. Examples include a validated renderer,
paired interventions, or a design guarantee. A similarity score fitted only
to unpaired images is not, by itself, cross-world information. We encode an
externally justified witness family by regional costs, use their common
sublevel intersections to audit an image pair, and project the resulting
admissibility rule to a prespecified finite feature. The projection is then
an explicit assumption in a support-only partial-identification model; it is
not inferred from the causal graph.

The contribution is a certificate-to-inference pipeline:

\begin{enumerate}[leftmargin=1.6em]
  \item We turn regional costs into a graded feasibility complex. For the
  labeled witness structures produced by this audit, we derive exact
  certificate-size and repair formulas: a sharp $q(s+1)$ certificate for a
  finite atlas with $q$ witnesses and $s$ exceptional roles, a
  blocker-hypergraph description of minimal explanations, and a heterogeneous
  action-stratified certificate with size $\sum_a(d_a+1)$. Classical Helly,
  union-of-convex-set, fractional-Helly, and tolerance results supply the
  underlying intersection principles \cite{danzer1963helly,amenta1996short,
  eckhoff2009morris,montejano2011tolerance,kim2023leray}.

  \item We separate this audit from identification. A boundary proposition
  records that shared exogeneity alone admits every coupling. Once an
  external relation is declared, its relation-supported couplings give the
  exact identified set in a stated support-only two-regime feature model.
  No coupling inside that set is silently selected.

  \item We propagate simultaneous marginal confidence regions through the
  coupling program to obtain finite-sample outer confidence bounds for the
  complete oracle interval. Controlled and finite paired studies illustrate
  the local-to-global separation, while synthetic programs check the predicted
  interval and computational behavior. We report the empirical information
  boundary explicitly. The accompanying public repository contains the
  implementation, tests, configurations, and retained outputs; external
  datasets must be obtained from their original sources.
\end{enumerate}

The main line is
\[
\begin{aligned}
\text{regional costs}&\ \longrightarrow\
\text{common-witness certificate}\ \longrightarrow\
\text{declared feature relation}\\
&\ \longrightarrow\
\text{sharp coupling bounds}\ \longrightarrow\
\text{outer confidence bounds}.
\end{aligned}
\]
Technical extensions, full protocols, and secondary analyses are indexed in
the Supplementary Material.

\section{Related work and positioning}
\label{sec:related}

Pan and Bareinboim establish the nonidentification boundary for
counterfactual image editing and propose feature-level causal bounds
\cite{pan2024counterfactual}; later work develops a disentangled causal latent
editor \cite{pan2025bdcls}. Our object is different: an externally anchored
same-witness audit followed by a support-only partial-identification analysis.
It neither replaces their construction nor weakens their impossibility
theorem.

Partial identification of joint potential-outcome functionals has a long
history \cite{manski1990bounds,balke1997bounds,tian2000probabilities,
fan2010sharp,fan2017partial,zhang2022partial}. Coupling and transport methods
make the remaining cross-world ambiguity explicit
\cite{strassen1965existence,rachev1998mass,delara2024transport}. The linear
program and its transport dual are established tools. The contribution here
is their placement after an auditable, externally sourced relation, together
with a precise support-only sharpness statement.

Helly's theorem and fractional Helly theory control intersections of convex
families \cite{danzer1963helly,katchalski1979problem,kalai1984intersection}.
Amenta and Eckhoff--Nischke treat unions of convex components and the
generalized pigeonhole mechanism \cite{amenta1996short,eckhoff2009morris};
tolerance variants are also established
\cite{montejano2011tolerance,kim2023leray}. Our certificate theorems are
exact labeled specializations for the witness structures arising in the
audit, with sharpness and blocker-based repair. We do not claim that
finite-feature bounding, transport duality, or Helly theory is new.

Finally, \Cref{thm:outer-coverage} is confidence-set propagation. Its
importance here is the estimand: it covers the entire identified interval.
Clopper--Pearson bands are conservative but nonasymptotic
\cite{clopper1934use}; more efficient simultaneous multinomial regions can be
substituted if their joint coverage is proved
\cite{goodman1965simultaneous,sison1995simultaneous,may2000constructing}.

\section{Setting and the common-witness audit}
\label{sec:setting}

\subsection{Feature target and identification boundary}

Let $X\in\{0,1\}$ denote an intervention and $I_x\in\I$ the corresponding
potential image, where $\I$ is a standard Borel space. We prespecify a
measurable finite feature
\[
  \Psi:\I\to\Y,\qquad \Y=\{1,\ldots,K\},\qquad Y_x=\Psi(I_x),
\]
and assume that the single-world laws $\mu_x(y)=\Pp(Y_x=y)$ are identified
from randomized intervention data or another valid causal argument. Image
samples alone do not generally provide this identification.

Write $\pi_{yy'}=\Pp(Y_0=y,Y_1=y')$. Our targets are bounded linear
functionals
\begin{equation}
  H_h(\pi)=\sum_{y,y'}h(y,y')\pi_{yy'}.
  \label{eq:linear-target}
\end{equation}
Transition probabilities, average feature changes, and numerators of
conditional feature queries have this form. A conditional query with a
positive identified denominator $\mu_0(B)$ is obtained by dividing the
corresponding numerator by that fixed quantity.

\begin{proposition}[Shared-exogeneity saturation]
\label{prop:saturation}
Let $\mu_0,\mu_1$ be probability laws on a common standard Borel space.
Every coupling $\pi\in\Pi(\mu_0,\mu_1)$ is the joint potential-outcome law
of a two-regime structural model with one exogenous variable shared across
interventions.
\end{proposition}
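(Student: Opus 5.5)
The plan is to build the structural model directly from the coupling. Let $\pi\in\Pi(\mu_0,\mu_1)$ be given on $\mathcal S\times\mathcal S$, where $\mathcal S$ is the common standard Borel space. Take the exogenous variable to be the pair itself: $U=(U_0,U_1)$ with values in $\mathcal S\times\mathcal S$ and law $\pi$. Define the structural assignment for the outcome as the coordinate projection
\[
  f(x,u)=u_x,\qquad x\in\{0,1\},\ u=(u_0,u_1)\in\mathcal S\times\mathcal S .
\]
Under the intervention $do(X=x)$ the outcome is $Y_x=f(x,U)=U_x$. Both regimes are evaluated at the same draw of $U$, so $U$ is one exogenous variable shared across interventions, as the statement requires.

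The verification then has three parts.

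\textbf{Measurability.} The map $(x,u)\mapsto u_x$ is measurable on $\{0,1\}\times\mathcal S\times\mathcal S$ with the product Borel structure. This holds because each coordinate projection is measurable and $\{0,1\}$ is discrete.

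\textbf{Joint law.} The joint potential-outcome law is the law of $(Y_0,Y_1)=(f(0,U),f(1,U))=(U_0,U_1)=U$, which is $\pi$ by construction.

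\textbf{Marginals.} Each single-world law is $\mathrm{Law}(Y_x)=\mathrm{Law}(U_x)$, the $x$-th marginal of $\pi$. This equals $\mu_x$ because $\pi\in\Pi(\mu_0,\mu_1)$. The model therefore reproduces the prescribed regime marginals, so it is consistent with any identification of $\mu_0,\mu_1$.

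If the setting also needs an observational regime, we can add $X$ as a further endogenous variable with its own independent exogenous noise, for example $X\sim\mathrm{Bernoulli}(1/2)$ independent of $U$. The observed outcome is then $f(X,U)$. This leaves the interventional joint law unchanged.

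The main obstacle is the possible objection that $\mathcal S\times\mathcal S$ is not the ``right'' exogenous space, for instance if the framework insists on $U$ being uniform on $[0,1]$. I would handle this with the Borel isomorphism theorem. Since $\mathcal S\times\mathcal S$ is standard Borel, there is a measurable map $T:[0,1]\to\mathcal S\times\mathcal S$ with $T_\#\mathrm{Leb}=\pi$, which is the usual randomization lemma for standard Borel spaces. Setting $U\sim\mathrm{Unif}[0,1]$ and $f(x,u)=T(u)_x$ gives the same conclusion with a scalar shared exogenous variable.

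The substance of the proposition is that shared exogeneity places no constraint at all: every coupling is realized, so the identified set without further assumptions is all of $\Pi(\mu_0,\mu_1)$.
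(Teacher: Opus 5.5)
Your proposal is correct and follows essentially the same route as the paper: take $U=(U_0,U_1)\sim\pi$ as the single shared exogenous variable and the coordinate map $f(x,u_0,u_1)=u_x$, so that $(Y_0,Y_1)=U$ has law $\pi$. Your additional remarks are sound details the paper leaves implicit: the measurability check, the optional observational regime, and the reduction to a uniform scalar $U$ via the randomization lemma for standard Borel spaces.
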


\begin{proof}
On the probability space with law $\pi$, let $U=(U_0,U_1)$ be the coordinate
map and set $f(x,u_0,u_1)=u_x$. Then $Y=f(X,U)$ satisfies $Y_x=U_x$ and
$\mathcal L(Y_0,Y_1)=\pi$. Both worlds use the same realization of $U$.
\end{proof}

\Cref{prop:saturation} is a boundary lemma, not a novelty claim. It shows
that using the same witness twice cannot by itself overcome the causal
hierarchy. Information enters only through restrictions on admissible
exogenous states, response functions, or cross-world pairs.

\subsection{Regional losses, grade, and nerve}

Let $\W$ be a declared witness space and let
$c_r(w;i,j)\geq 0$, $r\in[m]=\{1,\ldots,m\}$, measure the violation of role
$r$ by witness $w$ for the factual--candidate pair $(i,j)$. When $(i,j)$ is
fixed or clear from context, we suppress it from the notation. Thus, for
example, $c_r(w)$ denotes $c_r(w;i,j)$; the same convention applies to all
witness sets, grades, and tolerant quantities defined below.

The ASCM response type $U$ and the auditing witness $w$ are distinct objects
unless an external scientific argument identifies them. At tolerance
$\varepsilon$, put
\begin{equation}
	A_r^\varepsilon(i,j)
	=\{w\in\W:c_r(w;i,j)\leq\varepsilon\}.
	\label{eq:witness-set}
\end{equation}
Compactness and lower semicontinuity are imposed whenever an attained
minimizer is used; the measurable version is given in Supplement S1.
\begin{definition}[Common-witness grade and nerve]
\label{def:grade}
For nonempty $S\subseteq[m]$, define
\begin{equation}
  F(S;i,j)=\inf_{w\in\W}\max_{r\in S}c_r(w;i,j),
  \qquad
  \rho(i,j)=F([m];i,j).
  \label{eq:grade}
\end{equation}
The witness nerve at scale $\varepsilon$ is
\begin{equation}
  K_\varepsilon(i,j)
  =\{\varnothing\}\cup
  \left\{\varnothing\ne S\subseteq[m]:
  \bigcap_{r\in S}A_r^\varepsilon(i,j)\ne\varnothing\right\}.
  \label{eq:nerve}
\end{equation}
\end{definition}

If $T\subseteq S$, then $F(T;i,j)\leq F(S;i,j)$. Consequently
$K_\varepsilon$ is a simplicial complex, maximal faces are maximal jointly
explainable role sets, and minimal nonfaces are minimal incompatibility
explanations. Under attainment, $(i,j)$ has one global witness exactly when
$\rho(i,j)\leq\varepsilon$.

To permit a prespecified number of regional exceptions, define
\begin{equation}
  F^{[s]}(S;i,j)=
  \inf_{w\in\W}
  \min_{\substack{D\subseteq S\\|D|\leq s}}
  \max_{r\in S\setminus D}c_r(w;i,j),
  \qquad \max\varnothing:=0,
  \label{eq:tolerant-grade}
\end{equation}
and $\rho_s=F^{[s]}([m])$. The integer $s$ is a within-pair role budget;
it is not a probability mass of inadmissible feature pairs.

\subsection{From image witnesses to a feature relation}

The audit induces the existential feature projection
\begin{equation}
  \R_\varepsilon=
  \left\{(y,y'):\begin{array}{l}
  \text{there is a lift $(i,j)$ with $\Psi(i)=y$, $\Psi(j)=y'$,}\\[-1mm]
  \text{and $\rho(i,j)\leq\varepsilon$}
  \end{array}\right\}.
  \label{eq:feature-relation}
\end{equation}
The causal model must separately assume
$\Pp\{(Y_0,Y_1)\in\R_\varepsilon\}=1$. The relation is therefore an
externally declared Layer-3 restriction, not a consequence of its definition.
An existential feature projection can also be an outer relaxation of the
image-level problem; it is lossless only under a feature-saturation or
conditional-fiber condition (Supplement S3). Without an external anchor,
the honest default is $\R_\varepsilon=\Y^2$.

\section{Combinatorial certificates and repair}
\label{sec:certificates}

We first give the finite-atlas result used directly by the experiments, then
place its convex analogue in the classical Helly context.

\subsection{Finite atlases}

\begin{theorem}[Exact finite-atlas tolerant certificate]
\label{thm:finite-atlas}
Suppose $\W$ is finite with $q=|\W|$ and $0\leq s<m$. Then
\begin{equation}
  \rho_s(i,j)=
  \max_{\substack{\varnothing\ne S\subseteq[m]\\
        |S|\leq\min\{q(s+1),m\}}}F^{[s]}(S;i,j).
  \label{eq:finite-atlas-certificate}
\end{equation}
Thus tolerant incompatibility has a certificate using at most $q(s+1)$
roles. The bound is best possible.
\end{theorem}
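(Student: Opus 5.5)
Since $\W$ is finite, every infimum over $w$ is a minimum, so we can work directly with the values $c_r(w)$ for $w\in\W$. First note that $F^{[s]}(S)$ is monotone in $S$. If $T\subseteq S$ and $D\subseteq S$ with $|D|\le s$, then $D\cap T$ is an admissible exception set for $T$, and
\[
\max_{r\in T\setminus (D\cap T)}c_r(w)\le \max_{r\in S\setminus D}c_r(w).
\]
So $F^{[s]}(T)\le F^{[s]}(S)$. Hence the right-hand side of \eqref{eq:finite-atlas-certificate} is at most $\rho_s=F^{[s]}([m])$. When $q(s+1)\ge m$, the choice $S=[m]$ gives equality. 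For the remaining case $q(s+1)<m$ it suffices to build some $S$ with $|S|\le q(s+1)$ and $F^{[s]}(S)\ge\rho_s$.

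\textbf{Key construction.} Let $t=\rho_s$. For each $w\in\W$, order the roles by decreasing $c_r(w)$, breaking ties by a fixed rule, and let $S_w$ be the first $s+1$ roles in that order. Since $F^{[s]}([m])=t$, every $w$ satisfies $\min_{|D|\le s}\max_{r\notin D}c_r(w)\ge t$. The optimal $D$ for $w$ removes its $s$ largest violations, so the $(s+1)$-st largest value $c_{(s+1)}(w)$ is at least $t$, using $s<m$. Thus all roles in $S_w$ have $c_r(w)\ge t$. Put $S=\bigcup_{w}S_w$, so $|S|\le q(s+1)$.

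For each $w$ and each $D\subseteq S$ with $|D|\le s$, the set $S_w\setminus D$ is nonempty because $|S_w|=s+1$. Hence $\max_{r\in S\setminus D}c_r(w)\ge t$, so $F^{[s]}(S)\ge t$. If $|S|<q(s+1)$, we may pad $S$ arbitrarily; by monotonicity this does not lower $F^{[s]}(S)$. This proves \eqref{eq:finite-atlas-certificate}. The same argument applied to any threshold $\varepsilon<\rho_s$ gives the certificate reading: a set of at most $q(s+1)$ roles already witnesses $\rho_s>\varepsilon$.

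\textbf{Sharpness.} This is where care is needed. I would take $m=q(s+1)$ and split $[m]$ into disjoint blocks $B_1,\dots,B_q$, each of size $s+1$. Define $c_r(w_k)=1$ if $r\in B_k$ and $0$ otherwise. Then $\rho_s=1$: for each $w_k$, any exception set of size $\le s$ leaves some role of $B_k$ uncovered. Now take any $S$ with $|S|\le q(s+1)-1$. Some block $B_k$ meets $S$ in at most $s$ roles. Choosing $D=S\cap B_k$ makes $w_k$ violate nothing, so $F^{[s]}(S)=0$. Thus no certificate of size $q(s+1)-1$ exists, and the bound $q(s+1)$ cannot be improved. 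The same example still works if we insist on $m>q(s+1)$: add extra roles with $c\equiv0$ for every witness.

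The main obstacle is the $(s+1)$-st order-statistic step. It must be checked that the tolerant grade for a single witness equals its $(s+1)$-st largest violation on $[m]$, under the convention $\max\varnothing=0$. One must also check that ties and the edge case $|S|\le s$ are handled; the latter is why the hypothesis $s<m$ is needed.
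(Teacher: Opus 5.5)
Your proposal is correct and takes essentially the same route as the paper. The union over $w\in\W$ of each witness's $s+1$ largest-cost roles has size at most $q(s+1)$ and tolerant grade at least $\rho_s$, monotonicity of $F^{[s]}$ gives the reverse inequality, and your block example is the paper's sharpness construction with $d=1$, $b=0$. The differences are minor. You bound the grade of the union via the uniform threshold $t=\rho_s$ instead of the paper's per-witness equality of $(s+1)$st order statistics, you prove monotonicity explicitly, and your zero-cost padding roles validly extend sharpness to $m>q(s+1)$.
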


\begin{proof}
For fixed $w$, arrange its $m$ costs in nonincreasing order,
$c_{[1]}(w)\geq\cdots\geq c_{[m]}(w)$. Deleting at most $s$ roles leaves
the smallest possible maximum $c_{[s+1]}(w)$, hence
\[
  \rho_s=\min_{w\in\W}c_{[s+1]}(w).
\]
For every $w$, choose $T_w$ containing
$s+1$ roles with its largest costs and put $S_\star=\bigcup_wT_w$. Then
$|S_\star|\leq q(s+1)$. The $(s+1)$st largest cost of each $w$ on
$S_\star$ equals its $(s+1)$st largest cost on $[m]$, so
$F^{[s]}(S_\star)=\rho_s$. Monotonicity gives the reverse inequality for
every subfamily.

For sharpness, take $m=q(s+1)$ and partition the roles into disjoint blocks
$B_w$ of size $s+1$. Fix $0\leq b<d$ and set $c_r(w)=d$ for $r\in B_w$
and $c_r(w)=b$ otherwise. The complete family has tolerant grade $d$.
Every proper role set omits a role from some $B_w$; for that witness, at most
$s$ retained roles have cost $d$, so its tolerant grade is at most $b$.
Thus all $q(s+1)$ roles can be necessary.
\end{proof}

The proof is an exact consequence of the labeled finite-atlas structure. We
do not claim that tolerant Helly theory is new; generic tolerance transfers
and modern tolerance complexes are studied in
\cite{montejano2011tolerance,kim2023leray}. The audit-specific value of
\Cref{thm:finite-atlas} is its sharp certificate and the explicit repair
formula below.

\begin{proposition}[Blocker duality and exact repair]
\label{prop:blocker}
For fixed $\varepsilon$, let
\[
  B_w^\varepsilon=\{r:c_r(w;i,j)>\varepsilon\},
  \qquad G_w^\varepsilon=[m]\setminus B_w^\varepsilon.
\]
Then
\begin{equation}
  F^{[s]}(S;i,j)\leq\varepsilon
  \quad\Longleftrightarrow\quad
  \exists w\in\W:\ |S\cap B_w^\varepsilon|\leq s.
  \label{eq:blocker}
\end{equation}
Minimal $s$-tolerant incompatibility explanations are the inclusion-minimal
$(s+1)$-fold transversals of the bad-role hypergraph. Each has at most
$q(s+1)$ roles. For $s=0$,
\begin{equation}
  K_\varepsilon=\bigcup_{w\in\W}2^{G_w^\varepsilon},
  \qquad
  s_\varepsilon^\star
  :=\min\{s:\rho_s\leq\varepsilon\}
  =\min_{w\in\W}|B_w^\varepsilon|.
  \label{eq:repair}
\end{equation}
\end{proposition}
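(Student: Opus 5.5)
The plan is to reduce everything to the pointwise description of the tolerant grade. For fixed $w$, the inner quantity $\min_{|D|\le s}\max_{r\in S\setminus D}c_r(w)$ is obtained by deleting the $s$ largest costs in $S$. Exactly as in the proof of \Cref{thm:finite-atlas}, it therefore equals the $(s+1)$st largest cost of $w$ restricted to $S$, taken to be $0$ when $|S|\le s$.

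This order statistic is at most $\varepsilon$ precisely when at most $s$ roles of $S$ have cost exceeding $\varepsilon$, that is, when $|S\cap B_w^\varepsilon|\le s$. Taking the infimum over $w$ then gives \eqref{eq:blocker}, with one care point: the left side is an infimum. Under finiteness of $\W$ (the standing setting of this subsection, $q=|\W|$), or under the attainment convention of \Cref{sec:setting}, the infimum is a minimum. Then $F^{[s]}(S)\le\varepsilon$ holds iff some $w$ achieves a value $\le\varepsilon$, which yields the equivalence. I expect this attainment point to be the only real obstacle, and I would state explicitly that the proposition is read in the finite-atlas setting of \Cref{thm:finite-atlas}.

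For the transversal description, negate \eqref{eq:blocker}. A set $S$ is $s$-tolerantly incompatible iff $|S\cap B_w^\varepsilon|\ge s+1$ for every $w$, i.e.\ iff $S$ meets every edge of the hypergraph $\{B_w^\varepsilon\}_{w\in\W}$ at least $s+1$ times. Incompatibility explanations are thus the $(s+1)$-fold transversals, and minimal explanations are the inclusion-minimal ones. For the size bound, take a minimal transversal $S$, and for each $w$ pick any $s+1$ elements of $S\cap B_w^\varepsilon$. Their union is still an $(s+1)$-fold transversal contained in $S$ and has at most $q(s+1)$ roles, so by minimality it equals $S$. This mirrors the union construction $S_\star$ in \Cref{thm:finite-atlas}.

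For $s=0$, \eqref{eq:blocker} says that a nonempty $S$ is a face of $K_\varepsilon$ iff $S\cap B_w^\varepsilon=\varnothing$ for some $w$, i.e.\ iff $S\subseteq G_w^\varepsilon$. Together with the empty face this gives $K_\varepsilon=\bigcup_w 2^{G_w^\varepsilon}$. For the repair number, apply \eqref{eq:blocker} with $S=[m]$: $\rho_s\le\varepsilon$ iff some $w$ has $|B_w^\varepsilon|\le s$. The least such $s$ is therefore $\min_w|B_w^\varepsilon|$. This is well defined because $\min_w|B_w^\varepsilon|\le m$, and taking $s=m$ (or using $\max\varnothing=0$) always gives $\rho_s=0\le\varepsilon$.
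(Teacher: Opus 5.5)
Your proposal is correct and follows the paper's proof essentially step for step. The per-witness deletion count gives \eqref{eq:blocker}, negation gives the $(s+1)$-fold transversal description, picking $s+1$ bad roles per witness gives the $q(s+1)$ bound, and taking $S=[m]$ gives the repair number; your attainment remark for finite $\W$ and your explicit minimality argument for the size bound make precise what the paper's terse proof leaves implicit.
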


\begin{proof}
For fixed $w$, all retained costs are at most $\varepsilon$ after at most
$s$ deletions exactly when at most $s$ members of $S$ belong to
$B_w^\varepsilon$. This proves \eqref{eq:blocker}; negating it yields the
multiple-transversal description. Choosing $s+1$ bad roles for every
witness gives the size bound. Finally, $|B_w^\varepsilon|$ is exactly the
number of deletions required for witness $w$, and minimizing over $w$ proves
\eqref{eq:repair}.
\end{proof}

Rowwise order selection computes $\rho_s$, one valid certificate, and the
exact repair number in $O(qm)$ selection time (or $O(qm\log m)$ by sorting).
Enumeration of all minimal transversals can still be exponential
\cite{amaldi2003maximum}.

\subsection{Convex and action-stratified atlases}

For a compact convex witness set of affine dimension $d$, continuous
quasiconvex role losses have convex closed sublevel sets. The classical
Helly theorem then gives
\begin{equation}
  \rho(i,j)=
  \max_{\substack{\varnothing\ne S\subseteq[m]\\|S|\leq d+1}}F(S;i,j).
  \label{eq:helly-grade}
\end{equation}
Indeed, at the maximum local grade every $d+1$ sublevel sets intersect, so
the complete family intersects. Equation \eqref{eq:helly-grade} is a direct
application of Helly's theorem, not a new intersection theorem.

Many audits have a discrete requested action and continuous nuisance
parameters. The resulting witness space is a labeled disjoint union rather
than one convex set.

\begin{theorem}[Heterogeneous action-stratified certificate]
\label{thm:action-stratified}
Suppose
$\W=\bigsqcup_{a=1}^q\W_a$, where $\W_a$ is nonempty, compact, convex,
and has affine dimension $d_a$. Assume every role loss is continuous and
quasiconvex on each stratum. Define
\[
  F_a(S)=\min_{w\in\W_a}\max_{r\in S}c_r(w),
  \qquad F(S)=\min_aF_a(S),
\]
and $H=\min\{m,\sum_{a=1}^q(d_a+1)\}$. Then
\begin{equation}
  F([m])=
  \max_{\substack{\varnothing\ne S\subseteq[m]\\|S|\leq H}}F(S).
  \label{eq:action-certificate}
\end{equation}
The bound is sharp for every dimension list when
$m=\sum_a(d_a+1)$.
\end{theorem}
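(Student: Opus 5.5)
The plan is to mimic the finite-atlas argument of \Cref{thm:finite-atlas}, with each stratum playing the role of one witness and Helly's theorem \eqref{eq:helly-grade} supplying a per-stratum certificate of size $d_a+1$. The inequality $F(S)\leq F([m])$ for every $S$ is monotonicity, so only the reverse inequality needs proof.

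First I will verify that each minimum $F_a(S)$ is attained. The function $w\mapsto\max_{r\in S}c_r(w)$ is continuous on the compact set $\W_a$, so it has a minimizer. Moreover, each sublevel set $\{w\in\W_a:c_r(w)\leq t\}$ is closed and convex, by continuity and quasiconvexity on the stratum.

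Second, for each $a$ I will apply \eqref{eq:helly-grade} inside $\W_a$, which has affine dimension $d_a$. Helly's theorem applies there after identifying $\W_a$ with a compact convex body in $\mathbb R^{d_a}$. It yields a set $T_a\subseteq[m]$ with $|T_a|\leq\min\{d_a+1,m\}$ and $F_a(T_a)=F_a([m])$. To see this, let $t_a=F_a([m])$. If every $(d_a+1)$-subfamily had grade below $t_a$, every such subfamily would meet at level $t'=\max_{|S|\leq d_a+1}F_a(S)<t_a$. Since there are finitely many subfamilies the maximum is attained, so every $d_a+1$ of the compact convex sublevel sets at level $t'$ intersect. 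Helly then gives a common point at level $t'$, contradicting the definition of $t_a$. Hence some $T_a$ attains $t_a$. The degenerate case $m\leq d_a+1$ is handled by taking $T_a=[m]$.

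Third, put $S_\star=\bigcup_aT_a$, so that $|S_\star|\leq\sum_a(d_a+1)$, and trivially $|S_\star|\leq m$. Monotonicity of each $F_a$ gives $F_a(S_\star)\geq F_a(T_a)=F_a([m])$ for every $a$. Taking the minimum over $a$ gives $F(S_\star)\geq\min_aF_a([m])=F([m])$. This proves \eqref{eq:action-certificate}.

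For sharpness with $m=\sum_a(d_a+1)$, I will partition $[m]$ into blocks $B_a$ of size $d_a+1$ and work in each stratum $\W_a$, taken to be a $d_a$-simplex with facets indexed by $B_a$.

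\emph{Roles in $B_a$ on stratum $a$.} For $r\in B_a$, let $c_r$ on $\W_a$ be the affine function $c_r(w)=1-\lambda_r(w)$, where $\lambda_r$ is the barycentric coordinate associated with facet $r$. Any $d_a$ of these roles can be made to vanish simultaneously at a vertex. All $d_a+1$ of them, however, satisfy $\max_{r\in B_a}c_r\geq d_a/(d_a+1)$, because the barycentric coordinates sum to $1$. This is the standard tight Helly example. In the case $d_a=0$, the stratum is a point and the single role in $B_a$ is given cost $1$ there.

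\emph{Roles outside $B_a$ on stratum $a$.} For $r\notin B_a$, set $c_r\equiv0$ on $\W_a$.

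With these choices, $F_a(S)>0$ exactly when $B_a\subseteq S$. Consequently $F([m])>0$, while every proper subset $S$ omits some role of some block $B_a$ and therefore has $F(S)\leq F_a(S)=0$. So no certificate smaller than $m$ exists.

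The main obstacle I expect is the per-stratum Helly step, specifically the attainment and compactness details that make "the maximum local grade" an honest level at which all small subfamilies meet. The sharpness construction also needs a check that the costs are continuous and quasiconvex on each stratum; they are, since they are affine or constant there.
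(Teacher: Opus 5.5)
Your proof of the certificate equality is correct and follows the paper's argument. You take a per-stratum Helly certificate $T_a$ with $|T_a|\le d_a+1$ and $F_a(T_a)=F_a([m])$, form the union $S_\star=\bigcup_aT_a$, and use monotonicity of each $F_a$. Your contradiction argument just re-derives the Helly grade identity inside $\W_a$.

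The sharpness construction, however, fails as written.

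\emph{Why the costs do not work.} With $c_r=1-\lambda_r$ on $\W_a$, each $c_r$ vanishes only at the single vertex where $\lambda_r=1$. So for $d_a\ge2$ no two roles of $B_a$ vanish at a common point, and your claim that any $d_a$ of them vanish at a vertex is false. In fact, for nonempty $T\subseteq B_a$,
\[
\min_{w\in\W_a}\max_{r\in T}\bigl(1-\lambda_r(w)\bigr)=1-1/|T|.
\]
Hence $F_a(S)=1-1/|S\cap B_a|$ is already positive once $|S\cap B_a|\ge2$, and ``$F_a(S)>0$ exactly when $B_a\subseteq S$'' does not hold.

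\emph{A counterexample to the heterogeneous claim.} Take $(d_1,d_2)=(1,2)$, $m=5$, $B_1=\{1,2\}$, $B_2=\{3,4,5\}$. Then $F([5])=\min\{1/2,2/3\}=1/2$. Yet $S=\{1,2,3,4\}$ has $F_1(S)=F_2(S)=1/2$, so a $4$-role certificate already suffices. For equal dimensions your costs happen to remain sharp, since $1-1/d<d/(d+1)$, which is probably why the slip is easy to miss.

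\emph{The repair.} Drop the complement and take $c_r=\lambda_r$, the barycentric coordinate vanishing on facet $r$. The paper's $(d_a+1)x_j$ on $\Delta^{d_a}$ is the same up to a positive scale that normalizes each block's grade to one. Then the full block has grade $1/(d_a+1)>0$, because the coordinates sum to one. After removing $r\in B_a$, all remaining costs on stratum $a$ vanish at the vertex where $\lambda_r=1$. So every proper subset has grade zero, and your closing argument goes through.
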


\begin{proof}
Let $\rho_a=F_a([m])$. Applying \eqref{eq:helly-grade} within stratum $a$
gives $S_a\subseteq[m]$, $|S_a|\leq d_a+1$, with
$F_a(S_a)=\rho_a$. Put $S_\star=\bigcup_aS_a$. Monotonicity gives
\[
  \rho_a=F_a(S_a)\leq F_a(S_\star)\leq F_a([m])=\rho_a
\]
for every $a$. Hence $F(S_\star)=\min_a\rho_a=F([m])$ and
$|S_\star|\leq H$.

For sharpness, create one block of $d_a+1$ roles $(a,j)$ per stratum and
take $\W_a=\Delta^{d_a}$. For $x\in\Delta^{d_b}$ set
\[
  c_{(a,j)}(b,x)=
  \begin{cases}(d_a+1)x_j,&a=b,\\0,&a\ne b.\end{cases}
\]
The complete block has grade one in its stratum, so the full-family grade is
one. If $(a,j)$ is removed, choose stratum $a$ and vertex $e_j$; every
remaining cost is zero. Thus every proper subfamily has grade zero.
\end{proof}

Classical results for unions of convex sets already imply closely related
$q(d+1)$ Helly numbers \cite{amenta1996short,eckhoff2009morris}. The content
of \Cref{thm:action-stratified} is the audit-specialized exact grade equality,
heterogeneous dimension sum, and sharp labeled construction, not a claim of a
fundamentally new general Helly theorem.

The witness nerve also quantifies approximate coherence. Let $h=d+1$ and
let $N_h(\varepsilon)$ count its feasible $h$-faces. If
$D_\varepsilon$ is the largest number of roles explained by one witness and
$t_\varepsilon=m-D_\varepsilon$, Kalai's exact fractional-Helly bound gives
\begin{equation}
  N_h(\varepsilon)
  \leq \binom{m}{h}-\binom{d+t_\varepsilon}{h}.
  \label{eq:fractional-repair}
\end{equation}
In particular, if at least an $\alpha$ fraction of the $h$-subsets are
feasible, one witness explains at least
\begin{equation}
  \left\lceil\bigl[1-(1-\alpha)^{1/(d+1)}\bigr]m\right\rceil
  \label{eq:fractional-core}
\end{equation}
roles \cite{katchalski1979problem,kalai1984intersection,eckhoff1985upper}.
The complete proof and exact integer inversion are in Supplement S4. Dense
local compatibility yields a large coherent core, not global compatibility.

If estimated losses obey
$\max_r\sup_w|\widehat c_r(w;i,j)-c_r(w;i,j)|\leq\delta$, then every
tolerant grade changes by at most $\delta$ and
\begin{equation}
  K^{[s]}_{\varepsilon-\delta}
  \subseteq\widehat K^{[s]}_\varepsilon
  \subseteq K^{[s]}_{\varepsilon+\delta}.
  \label{eq:nerve-stability}
\end{equation}
This is a deterministic robustness statement; statistical use requires an
independently justified uniform error bound. Finite-net outer approximation
and exact nerve recovery away from critical grades are in Supplement S4.

\section{Sharp feature bounds}
\label{sec:bounds}

For a declared relation $\R\subseteq\Y^2$, let
\begin{equation}
  \Pi_{\R}(\mu_0,\mu_1)=
  \left\{\pi\in\mathbb R_+^{K\times K}:
  \sum_{y'}\pi_{yy'}=\mu_0(y),\quad
  \sum_y\pi_{yy'}=\mu_1(y'),\quad
  \pi_{yy'}=0\ \text{outside }\R\right\}.
  \label{eq:relation-couplings}
\end{equation}
Equivalently, $\pi\mathbf 1=\mu_0$ fixes row sums and
$\pi^\top\mathbf 1=\mu_1$ fixes column sums. The transpose is needed
because left multiplication by $\pi^\top$ sums the original matrix down its
rows, producing one total for each column.

The \emph{support-only feature model} contains every two-regime feature SCM
with these marginals and $\Pp\{(Y_0,Y_1)\in\R\}=1$, with no other response
function, latent-DAG, or full-image restriction.

\begin{theorem}[Sharp support-only identified interval]
\label{thm:sharp}
If $\Pi_{\R}(\mu_0,\mu_1)$ is nonempty, then the sharp identified set for
\eqref{eq:linear-target} in the support-only feature model is
\begin{equation}
  \left[
  \min_{\pi\in\Pi_{\R}(\mu_0,\mu_1)}H_h(\pi),
  \max_{\pi\in\Pi_{\R}(\mu_0,\mu_1)}H_h(\pi)
  \right].
  \label{eq:sharp-interval}
\end{equation}
Both endpoints and every intermediate value are attainable.
\end{theorem}

\begin{proof}
The feasible set is a nonempty compact convex polytope and $H_h$ is linear,
so its image is a closed interval with attained endpoints. Every model in
the declared class induces a feasible coupling. Conversely,
\Cref{prop:saturation} realizes every feasible coupling as a shared-exogenous
feature SCM, and mixtures realize all intermediate values.
\end{proof}

Sharpness is relative to the displayed model. If a fixed full-image law,
latent DAG, or nonsaturated image relation imposes further restrictions, the
feature program can be only outer. This qualification prevents an audit
relation from being mistaken for identification of the Pan--Bareinboim pixel
counterfactual.

If the external science supports only a violation-mass budget $\tau$, replace
hard support by
\begin{equation}
  \Pi_{\R}^{(\tau)}(\mu_0,\mu_1)
  =\{\pi\in\Pi(\mu_0,\mu_1):\pi(\R^c)\leq\tau\}.
  \label{eq:contamination-class}
\end{equation}
The same compactness argument makes its optimized interval sharp in the
corresponding budget model and nested in $\tau$. The smallest feasible
budget is the Hall deficiency
\[
  \delta_{\R}(\mu_0,\mu_1)
  =\max_{A\subseteq\Y}\{\mu_0(A)-\mu_1(\R(A))\},
\]
by max-flow/min-cut and the finite Hall--Strassen criterion
\cite{strassen1965existence}. These established transport results, their
duals, and the Polish-space extension are collected in Supplements S2--S3.
The parameter $\tau$ is sensitivity input, not a probability learned from
unpaired images.

For sparse $\R$, the two endpoint programs use one variable per allowed edge
and $2|\R|$ nonzeros in the marginal equality matrix. The relation is first
checked for feasibility; infeasibility is reported rather than hidden by
renormalization.

\section{Finite-sample outer inference}
\label{sec:inference}

Let $\mathcal C_\alpha$ be any random simultaneous confidence region for the
two regime marginals satisfying
\begin{equation}
  \Pp\{(\mu_0,\mu_1)\in\mathcal C_\alpha\}\geq1-\alpha.
  \label{eq:marginal-confidence-region}
\end{equation}
For a fixed known relation define the union of compatible couplings
\begin{equation}
  \widehat{\mathcal P}_\alpha(\R)=
  \bigcup_{(\nu_0,\nu_1)\in\mathcal C_\alpha}
  \Pi_{\R}(\nu_0,\nu_1).
  \label{eq:outer-couplings}
\end{equation}

\begin{theorem}[Outer coverage of the complete oracle interval]
\label{thm:outer-coverage}
Let $[L^\star,U^\star]$ be the sharp interval over
$\Pi_{\R}(\mu_0,\mu_1)$. If the random endpoints are measurable, then
\begin{equation}
  \Pp\left\{[L^\star,U^\star]\subseteq
  \left[
  \inf_{\pi\in\widehat{\mathcal P}_\alpha(\R)}H_h(\pi),
  \sup_{\pi\in\widehat{\mathcal P}_\alpha(\R)}H_h(\pi)
  \right]\right\}\geq1-\alpha.
  \label{eq:outer-coverage}
\end{equation}
If the random feasible set is empty, reporting the vacuous payoff range
preserves the guarantee.
\end{theorem}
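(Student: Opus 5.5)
The argument is a containment on a single good event; no probability calculation beyond \eqref{eq:marginal-confidence-region} is needed. Let $E=\{(\mu_0,\mu_1)\in\mathcal C_\alpha\}$. By \eqref{eq:marginal-confidence-region}, $\Pp(E)\geq 1-\alpha$. It therefore suffices to show that on $E$ the inclusion in \eqref{eq:outer-coverage} holds pointwise, that is, deterministically for every outcome in $E$.

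Fix an outcome in $E$. Since $(\mu_0,\mu_1)$ is one of the pairs indexed in the union \eqref{eq:outer-couplings}, we have $\Pi_{\R}(\mu_0,\mu_1)\subseteq\widehat{\mathcal P}_\alpha(\R)$. The sharp interval is defined over a nonempty $\Pi_{\R}(\mu_0,\mu_1)$. By \Cref{thm:sharp}, $L^\star$ and $U^\star$ are the attained minimum and maximum of $H_h$ over this polytope. An infimum over a larger set is no bigger and a supremum is no smaller, so
\[
\inf_{\widehat{\mathcal P}_\alpha(\R)}H_h\leq L^\star\leq U^\star\leq\sup_{\widehat{\mathcal P}_\alpha(\R)}H_h,
\]
which is the inclusion. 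The same containment shows that on $E$ the random feasible set $\widehat{\mathcal P}_\alpha(\R)$ is nonempty. Hence the empty case can occur only on $E^c$, and replacing the reported interval there by the vacuous payoff range $[\min h,\max h]$ cannot lower the probability of the event. In fact $[\min h,\max h]$ always contains $[L^\star,U^\star]$, since $H_h(\pi)$ is an average of $h$ under the probability vector $\pi$.

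Measurability is where care is needed. The event in \eqref{eq:outer-coverage} must be measurable for its probability to make sense, and the statement assumes this for the endpoints. If only outer probability were available, the argument would still give $\Pp_*\{\cdot\}\geq\Pp(E)$ because the event contains the measurable set $E$. So I would simply note that the event contains $E$ and invoke the assumed measurability. No uniformity over the relation is needed, since $\R$ is fixed and known.
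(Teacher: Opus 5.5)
Your proof is correct and is the paper's argument: on the event $\{(\mu_0,\mu_1)\in\mathcal C_\alpha\}$ the oracle coupling set lies inside $\widehat{\mathcal P}_\alpha(\R)$, so the infimum can only decrease and the supremum can only increase. Your extra points, that the feasible set is nonempty on that event and that $[\min h,\max h]$ always contains $[L^\star,U^\star]$, correctly expand the paper's one-line remark about the empty case. One small slip: $\Pp_*$ denotes inner, not outer, probability, and the inner-probability bound you state is in fact the one you want.
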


\begin{proof}
On the event in \eqref{eq:marginal-confidence-region}, every oracle-feasible
coupling belongs to \eqref{eq:outer-couplings}. Minimizing over the larger
set cannot increase the lower endpoint, and maximizing cannot decrease the
upper endpoint.
\end{proof}

A distribution-free concrete choice uses independent within-regime samples.
If $N_x(y)$ is the count in cell $y$ among $n_x$ observations, construct a
two-sided Clopper--Pearson interval for each of the $M=2K$ marginal cells at
cellwise noncoverage $\alpha/M$. In beta-quantile notation its endpoints are
\begin{align}
 \ell_x(y)&=
 \begin{cases}0,&N_x(y)=0,\\
 \operatorname{Beta}^{-1}\!\left(\frac{\alpha}{2M};
 N_x(y),n_x-N_x(y)+1\right),&N_x(y)>0,
 \end{cases}\label{eq:cp-lower}\\
 u_x(y)&=
 \begin{cases}1,&N_x(y)=n_x,\\
 \operatorname{Beta}^{-1}\!\left(1-\frac{\alpha}{2M};
 N_x(y)+1,n_x-N_x(y)\right),&N_x(y)<n_x.
 \end{cases}\label{eq:cp-upper}
\end{align}
Each count is marginally binomial, so Bonferroni gives simultaneous coverage
despite dependence among cells within a multinomial sample
\cite{clopper1934use}. Here, exact means finite-sample coverage of at
least the nominal level, not equality or shortest possible width. Hoeffding
bands provide a simpler alternative. The inference target is the complete
oracle interval, not one selected coupling.

If a random outer relation satisfies
$\Pp(\R_\star\subseteq\widehat\R^+)\geq1-\beta$, the same containment
argument and a union bound give coverage at least $1-\alpha-\beta$ when the
program uses $\widehat\R^+$. Sample splitting alone does not establish this
outer-relation property. Compatibility tests and independent finite-panel
variants, with their required sampling assumptions, are in Supplement S4.

\section{Audit and optimization pipeline}
\label{sec:algorithm}

The operational procedure keeps its information sources separate:

\begin{enumerate}[leftmargin=1.6em]
  \item Prespecify $\Psi$, protected roles, allowed descendants, the witness
  atlas, tolerance, and any role or relation-violation budget.
  \item For a candidate pair compute $\rho$ or $\rho_s$ and return a
  short blocking-role certificate and exact repair count when the audit
  fails.
  \item Project the externally justified audit to a feature relation and
  state explicitly whether that projection is exact or outer.
  \item Estimate the two single-world marginals and solve the lower and upper
  support or budget transport programs, using simultaneous marginal bands
  when finite-sample coverage is required.
  \item Return the identified interval, outer confidence interval, and
  incompatibility explanation. Selecting a point inside the interval
  requires a separate declared decision rule.
\end{enumerate}

For nonconvex neural witness spaces not represented by a verified finite
atlas, a verified global optimizer would be needed; the certificates above do
not validate an arbitrary local neural search.

\section{Numerical studies}
\label{sec:numerical-studies}

The numerical studies address three questions that correspond directly to the
theory: whether regional plausibility can coexist with global
incompatibility, whether a declared relation can sharpen feature-level bounds
without concealing infeasibility or misspecification, and whether the
resulting optimization and certificate computations remain tractable in
structured large instances.  The studies are not presented as evidence that
an unrestricted pixel-level counterfactual is identified.

\subsubsection*{Status of the numerical evidence}
The accompanying public repository contains the implementation, tests,
configurations, retained outputs, and integrity manifests supporting the
numerical studies. External datasets are not redistributed and must be
obtained from their original sources. The repository documents the scope of
the retained evidence and the limitations of exact historical and
cross-platform replay.

\subsection{Common-witness audits}
\label{sec:main-common-witness-audits}

\paragraph{Controlled rotation audit.}
We first use the official MNIST training and test partitions
\cite{lecun1998gradient} in a controlled renderer experiment.  For a source
image \(S\), one angle
\[
  A\in\{-20,-10,0,10,20\}\text{ degrees}
\]
is applied to the whole image, followed by clipped independent Gaussian
measurement noise with standard deviation \(0.02\).  The witness atlas is the
same declared set of five renderer responses.  For rectangular region \(P_r\),
the discrepancy of angle \(a\) is
\[
  c_r(a)=
  \left[|P_r|^{-1}\sum_{p\in P_r}
  \{I_1(p)-\operatorname{Rotate}(I_0,a)(p)\}^2\right]^{1/2}.
\]
The local and common-witness scores are
\[
  L=\max_r\min_a c_r(a),
  \qquad
  \rho=\min_a\max_r c_r(a).
\]
Thus \(L\) permits a different angle in each region, whereas \(\rho\) requires
one angle to explain every region.  A split of \(3{,}000\) training images
sets the \(0.975\) order-statistic threshold and a disjoint \(3{,}000\)-image
split assesses relation violations.  The archived protocol evaluates the
official \(10{,}000\) test images at three seeds and four fixed partitions
(\(4,9,16,\) and \(64\) regions).  These are twelve configurations, not
\(120{,}000\) independent test units: the same \(10{,}000\) images recur.
The principal negative control selects angles separately by region, so it is
constructed to satisfy the local quantifier while violating the common-angle
quantifier.  Because every split uses the same specified renderer, this is
held-out validation within a controlled mechanism, not independent scientific
validation of a causal relation.

\paragraph{Separately fitted editor and witness.}
The second study uses paired responses supplied by Morpho-MNIST
\cite{castro2019morphomnist} and smallNORB \cite{lecun2004learning}.
Morpho-MNIST provides index-matched plain, thin, and thick digits; these are
benchmark-generated transformations rather than physical interventions.
smallNORB provides physical toy objects photographed under factorially varied
pose and illumination.  Lighting \(0\) is paired with lightings \(1,3,\) and
\(5\), holding object, pose, and camera fixed.  These are matched photographs,
not observations of the same stochastic unit in two counterfactual worlds.

An action-conditional latent-residual editor is fitted on units disjoint from
a separate PCA--ridge witness.  The editor is an experimental vehicle rather
than a claimed architectural contribution.  The witness divides each image
into a fixed \(4\times4\) grid and normalizes each regional discrepancy by an
action-specific held-out threshold \(q_a\).  For normalized costs
\(\widetilde c_r(a)=c_r(a)/q_a\), the relevant scores are
\[
  L=\max_r\min_a\widetilde c_r(a),\qquad
  \rho_{\mathrm{free}}=\min_a\max_r\widetilde c_r(a),\qquad
  \rho_{\mathrm{req}}=\max_r
  \widetilde c_r(a_{\mathrm{req}}).
\]
A strictly positive \(q_a\) is required; the protocol uses a positive
calibration quantile, and any zero quantile would require a positive floor
fixed before evaluation.
A score at most one is accepted.  The free score asks whether some declared
action explains the complete image; only the requested-action score tests
compliance with the requested edit.  The negative control alternates paired
actions across the sixteen regions.  It can therefore be locally plausible
even though no single action explains the image.

The relation panel contains \(4{,}000\) Morpho-MNIST base digits and ten
smallNORB physical objects.  The editor evaluation uses a disjoint \(6{,}000\)
Morpho-MNIST base digits and fifteen smallNORB physical objects, with three
prespecified editor seeds.  Repeated actions and views are dependent
measurements of the same base digit or object and are not counted as new
independent units.  In particular, the smallNORB threshold was calibrated
from only five physical objects and is an empirical rule, not a
distribution-free conformal guarantee.

\begin{table}[t]
\centering
\small
\begin{tabular}{@{}lcrrrr@{}}
\toprule
Study & Evaluation unit & Coherent & Patch local & Patch global & AUROC \\
\midrule
MNIST rotations & \(10{,}000\)/setting
  & \(0.9709\)--\(0.9777\) & \(0.9702\)--\(0.9804\)
  & \(0\)--\(0.0018\) & \(0.9988\)--\(1.0000\) \\
Morpho-MNIST & \(4{,}000\) digits
  & \(0.977750\) & \(0.976750\) & \(0.113750\) & \(0.987213\) \\
smallNORB & \(10\) objects
  & \(0.959259\) & \(0.972840\) & \(0\) & \(1.000000\) \\
\bottomrule
\end{tabular}
\caption{Archived aggregate common-witness results.  Coherent is the
free-angle global acceptance for controlled MNIST and requested-action
acceptance for the paired studies; patch global uses the free common-action
score.  MNIST entries are ranges over twelve
seed--partition configurations that reuse the same test images.  The
smallNORB AUROC of \(1\) is complete separation on a fixed ten-object panel,
not a population-perfect guarantee.}
\label{tab:main-common-witness-results}
\end{table}

Table~\ref{tab:main-common-witness-results} shows the intended
local-to-global separation in all three studies.  In the controlled renderer,
patchworks retain approximately the same local acceptance as coherent pairs
but almost never admit one global angle.  The learned-witness study shows the
same qualitative separation, although the Morpho-MNIST free-action global
acceptance of \(0.113750\) is materially above zero.  On the disjoint editor
sets, the conditional editor's archived mean whole-image SSIM is \(0.903131\)
versus \(0.870558\) for conditional ridge on Morpho-MNIST, and \(0.986064\)
versus \(0.985821\) on smallNORB.  We report the corresponding improvements
only to appropriate precision, \(0.0326\) and \(0.00024\): the latter is
practically very small.

The smallNORB aggregate also conceals important object-level uncertainty.
Requested-action acceptance ranges from \(0.779835\) to \(1\) across ten
objects.  Zero global patchwork acceptances among ten objects has one-sided
\(95\%\) Clopper--Pearson upper limit \(0.2589\).  Consequently, the reported
AUROC \(1\) supports finite-panel separation of the constructed control but
does not establish population-perfect detection or a population patchwork
acceptance below \(0.20\).

\subsection{Sharp bounds and finite-sample outer inference}
\label{sec:main-bound-experiments}

A three-state synthetic feature provides a direct numerical check of the
support-constrained coupling program.  With only the two regime marginals,
the sharp target interval is \([0.35,0.80]\).  Imposing the declared one-step
relation narrows it to \([0.55,0.70]\).  A misspecification control places
\(0.20\) of the true coupling mass outside that relation, and the constrained
interval then fails to cover the true target.  The negative control is
essential: narrowing is created by the added relation, not by the observed
marginals alone.

For finite-sample inference, independent samples from each regime are drawn
at
\[
  n\in\{100,300,1000,3000,10000,30000\},
\]
with \(300\) repetitions per sample size.  The archived Hoeffding outer
intervals cover the complete
oracle identified interval in all \(1{,}800\) repetitions, with mean width
decreasing from \(0.9237\) to \(0.2075\).  A later analysis applies the exact
Bonferroni--Clopper--Pearson construction in
\eqref{eq:cp-lower}--\eqref{eq:cp-upper} to the same archived cell counts. Its
reported mean widths decrease from \(0.7413\) to \(0.1861\), reductions of
\(10.3\%\)--\(27.8\%\) relative to Hoeffding, and all \(1{,}800\) intervals
again cover.  These successes are an implementation check under the stated
simulation law, not evidence of exact nominal calibration; the coverage
guarantee follows from the theorem.  The exact-band comparison is
post-confirmatory and descriptive because it was specified after the
Hoeffding outcomes had been inspected.

\begin{figure}[t]
  \centering
  \includegraphics[width=.98\linewidth]
  {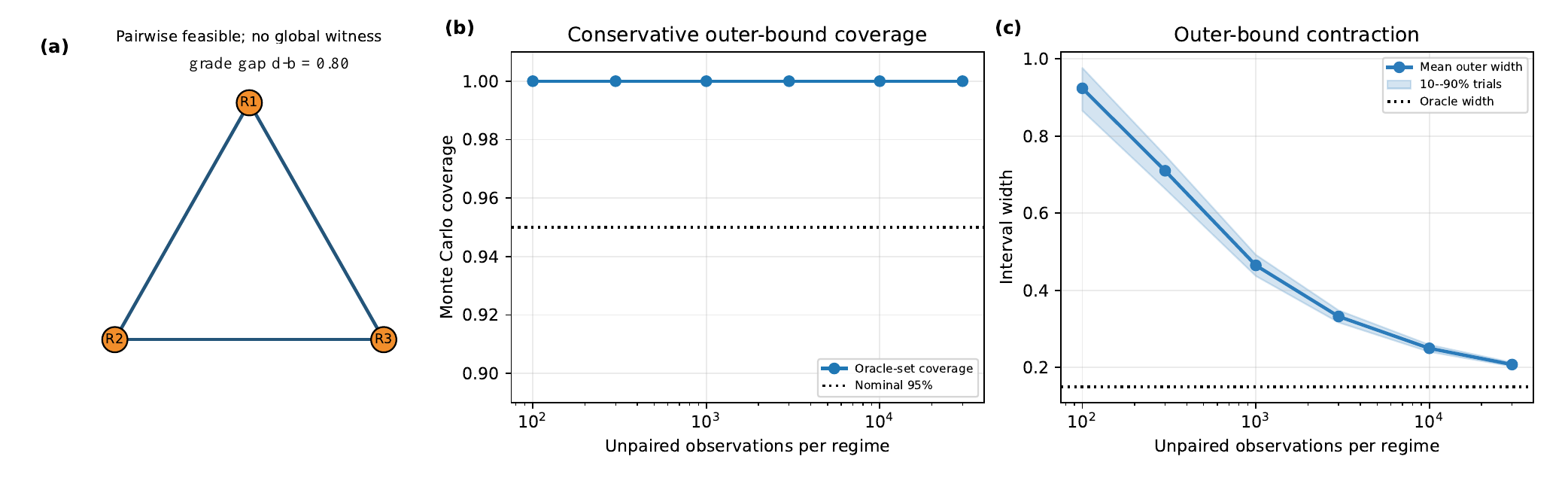}
  \caption{Archived synthetic summaries.  Left: all pairs can be compatible
  while the complete witness intersection is empty.  Middle and right:
  observed coverage of the complete oracle interval and contraction of the
  Hoeffding outer interval as the regime sample sizes increase.}
  \label{fig:main-synthetic-summary}
\end{figure}

The controlled audit-to-bound panel also records a necessary negative result.
It fixes ten images per digit, for \(N=100\) units, and the raw witness
relation contains \(0.0097\) of the \(N^2\) candidate pairs.  The hard-support
transport program is infeasible and is reported as such; no renormalization
or silent relaxation is used.  With the predeclared empirical violation
budget \(\tau=0.08\), the archived interval is \([0.1517,0.2600]\), compared
with \([0.0600,0.8200]\) without the image relation, and contains the hidden
paired target \(0.2100\).  However, balancing the panel by digit violates the
i.i.d.\ premise of the intended pooled binomial guarantee.  A post-hoc
stratified sensitivity calculation uses \(\tau=0.12\) and gives
\([0.1183,0.2975]\), also containing \(0.2100\).  Neither relaxed result is a
confirmatory \(95\%\) confidence statement; a new independent panel with the
stratified procedure fixed in advance would be required.

\subsection{Structured computational checks}
\label{sec:main-scaling}

The archived computations use sparse edge variables for banded transport and
short dual certificates for repeated-simplex minimax instances.  These tests
check the implementations against known optima and residual conditions; they
do not claim comparable scaling for dense arbitrary relations, face
enumeration, or nonconvex neural witness optimization.

\begin{table}[t]
\centering
\small
\begin{tabular}{@{}lrrr@{}}
\toprule
Structured problem & Largest instance & Representation & Time (s) \\
\midrule
Sparse coupling & \(10{,}000\) states & \(109{,}970\) edges & \(18.521\) \\
Affine minimax, \(d=4\) & \(100{,}000\) roles & \(5\)-role certificate & \(0.251\) \\
Affine minimax, \(d=8\) & \(100{,}000\) roles & \(9\)-role certificate & \(0.385\) \\
Affine minimax, \(d=16\) & \(100{,}000\) roles & \(17\)-role certificate & \(0.786\) \\
Helly-tight & \(1{,}024\) roles & dimension \(1{,}023\) & \(9.704\) \\
Finite-atlas sharpness & \(12\) roles & \(4{,}094\) proper faces & \(0.192\) \\
\bottomrule
\end{tabular}
\caption{Largest successful cases in the archived hardware-specific scaling
summary.  Relations and affine certificate families are supplied rather than
learned.  Times are descriptive and hardware-specific.}
\label{tab:main-scaling}
\end{table}

All \(35\) archived scaling cases are reported as successful in \(34.85\)
seconds total with peak resident memory \(0.483\) GiB.  At \(10{,}000\)
states, sparse transport replaces \(10^8\) dense state pairs by \(109{,}970\)
edge variables; the reported marginal residuals are below
\(1.4\times10^{-18}\).  Across the affine cases, the largest reported primal,
stationarity, or duality error is \(3.34\times10^{-16}\).  Constructed
leave-one-out atlases with \(m=4,\ldots,12\) accept every nonempty proper role
set and reject the full set, attaining the finite-atlas certificate bound.
These are numerical verification and structured-scaling results, not an
empirical claim about naturally occurring high-order image obstructions.

\subsection{Empirical scope}
\label{sec:main-empirical-scope}

The experiments validate the declared computations and local-to-global
failure mode on controlled or finite paired response families.  They do not
establish the scientific correctness of an arbitrary relation, identify an
unrestricted counterfactual image, or infer a joint multi-regime image law.
Morpho-MNIST is synthetic, and smallNORB has only ten physical objects in the
relation panel.  The reported large-state computations rely on supplied
sparse or repeated structure.

\section{Scope, information boundary, and limitations}
\label{sec:limitations}

We first clarify the scope of the results. Unrestricted pixel-level
counterfactuals are generally not identified from unpaired regime marginals.
Accordingly, the framework targets sharp bounds for prespecified
finite-dimensional image features under a declared support restriction. This
is the identified object of the analysis rather than an approximation to an
otherwise identified pixel-level counterfactual.

The witness atlas and the cross-world relation are additional scientific
inputs, not consequences of the observed regime marginals. The witness atlas
determines the coherence question being audited, whereas the relation
determines the admissible feature couplings used for partial identification.
If either is misspecified, the audit may reject coherent pairs or the
identified interval may exclude the true target. Sample splitting can assess
empirical performance but cannot by itself establish the causal validity of
these inputs. Likewise, violation budgets are sensitivity parameters unless
they are independently calibrated.

A further information boundary arises from feature projection. Projecting an
image-level relation onto a coarse feature space can discard image-level
restrictions. The feature-level and image-level analyses coincide only under
the feature-saturation or conditional-fiber conditions stated in the
Supplementary Material.

The current experiments provide controlled evaluations on MNIST,
Morpho-MNIST, and smallNORB, together with synthetic and computational
studies. They validate the predicted local--global separation, sharp-bound
calculations, finite-sample coverage, and structured computational scaling in
these settings. Evaluation on broader natural-image domains and empirical
validation of the multi-regime extension are left for future work.

\section{Conclusion}
\label{sec:conclusion}

Local regional plausibility need not imply one globally coherent
counterfactual explanation. The common-witness grade records this distinction
as a feasibility complex. Finite and action-stratified witness structures
yield short exact certificates and repair counts; an externally justified
feature relation then narrows, but does not select within, the coupling of
the regime marginals. Optimizing over all compatible couplings gives sharp
feature bounds in the stated support-only model, and simultaneous marginal
confidence regions give finite-sample outer coverage of the complete oracle
interval. The resulting pipeline is mathematically explicit about where
information enters and where ambiguity remains. Its validity in a new
application rests on prespecification, external validation of the witness
relation, and reproducible evidence at the correct independent unit.

\section*{Code and data availability}

The implementation code, experiment runners, tests, protocols, retained aggregate outputs, and reproduction instructions are available \href{https://github.com/joseffaghihi/Common-Witness-Certificatesand-Sharp-Feature-Bounds-for-Counterfactual-Image-Auditing}{here}. External datasets are not redistributed and must be obtained from the original sources cited in the Supplementary Material. Documented reproduction limitations are provided in the repository.

\bibliographystyle{siamplain}
\bibliography{references}

\end{document}